\newcommand{\beq}{\begin{equation}} 
\newcommand{\eeq}{\end{equation}}
\newcommand{\reg}{{\omega}}
\newcommand{\bi}{\begin{itemize}}
\newcommand{\be}{\begin{enumerate}}
\newcommand{\ei}{\end{itemize}}
\newcommand{\ee}{\end{enumerate}}
\newcommand{\R}{{\mathbb R}}
\newcommand{\calG}{{\cal G}}
\newcommand{\calN}{{\cal N}}
\newcommand{\lb}{{\langle}}
\newcommand{\rb}{{\rangle}}
\def\boldf#1{\hbox{\rlap{$#1$}\kern.4pt{$#1$}}}
\newcommand{\trans}{^{\scriptscriptstyle \top}}
\newtheorem{proposition}{Proposition}[section]
\newtheorem{definition}{Definition}[section]
\newcommand{\rd}{\R^d}
\DeclareMathOperator\prox{prox}
\DeclareMathOperator\argmin{argmin}
\newcommand{\down}{^\downarrow}
\DeclareMathOperator\supp{supp}
\DeclareMathOperator\conv{conv}
\def\eop{$\rule{1.3ex}{1.3ex}$}
\renewcommand\qedsymbol\eop  
\title{Sparse Prediction with the $k$-Support Norm}
\author{Andreas Argyriou \\ argyriou@ttic.edu \\ Toyota Technological Institute at Chicago \bigskip
\\ Rina Foygel \\ rina@uchicago.edu \\ University of Chicago \bigskip
\\ Nathan Srebro \\ nati@ttic.edu \\ Toyota Technological Institute at Chicago}
\begin{document}

\maketitle

\begin{abstract}
  We derive a novel norm that corresponds to the tightest convex
  relaxation of sparsity combined with an $\ell_2$ penalty. We show
  that this new {\em $k$-support norm} provides a tighter relaxation than the elastic
  net and is thus a good replacement for the Lasso or the elastic net
  in sparse prediction problems.  Through the study of the $k$-support
  norm, we also bound the looseness of the elastic net, thus shedding new
  light on it and providing justification for its use.
\end{abstract}

\section{Introduction}
\label{sec:intro}


Regularizing with the $\ell_1$ norm, when we expect a sparse
solution to a regression problem,  is often justified by $\|w\|_1$ being the ``convex
envelope'' of $\|w\|_0$ (the number of non-zero coordinates
of a vector $w\in\R^d$).  That is, $\|w\|_1$ is the tightest convex lower
bound on $\|w\|_0$.  
But we must be careful with this
statement---
for sparse vectors with large entries,
$\|w\|_0$ can be small while $\|w\|_1$ is large.
In order to discuss convex lower bounds on $\|w\|_0$, we must
impose some scale constraint.  A more accurate statement is that
$\|w\|_1 \leq \|w\|_{\infty}\|w\|_0$,
and so, when the magnitudes of entries in $w$ are bounded by $1$, then
$\|w\|_1 \leq \|w\|_0$, and indeed it is the largest such
convex lower bound.  Viewed as a convex outer relaxation, 
\beq S^{(\infty)}_k:= \big\{ w
 \,\big|\, \|w\|_0 \leq k, \|w\|_{\infty} \leq 1 \big\}
\subseteq \big\{ w \,\big|\, \|w\|_1 \leq k \big\}\;.\eeq
 Intersecting the right-hand-side with the $\ell_{\infty}$ unit ball, we get
the tightest convex outer bound (convex hull) of $S^{(\infty)}_{k}$:
\beq \big\{ w   \,\big|\, \|w\|_1 \leq k, \|w\|_{\infty}\leq 1
\big\} = \conv (S^{(\infty)}_k)\;.\eeq

However, in our view, this relationship between $\|w\|_1$ and
$\|w\|_0$ yields disappointing learning guarantees, and does not
appropriately capture the success of the $\ell_1$ norm as a surrogate
for sparsity.  In particular, the sample complexity\footnote{
We define this as the number of observations needed in
order to ensure expected prediction error no more than $\epsilon$
worse than that of
the best $k$-sparse predictor, for an arbitrary constant $\epsilon$
(that is, we suppress the dependence on $\epsilon$ and focus on the
dependence on the sparsity $k$ and dimensionality $d$).} of
learning a linear predictor with $k$ non-zero entries by empirical
risk minimization inside this class (an NP-hard optimization problem)
scales as\footnote{This is based on bounding the VC-subgraph dimension of this
  class, which is essentially the effective number of parameters.}
$O(k \log d)$, but relaxing to the constraint $\|w\|_1\leq k$ yields a
sample complexity which scales as $O(k^2 \log d)$, because the sample
complexity of $\ell_1$-regularized learning scales quadratically with
the $\ell_1$ norm \cite{kakade,zhang2002covering}.

Perhaps a better reason for the $\ell_1$ norm being a good surrogate for sparsity
is that, not only do we expect the magnitude of each entry of $w$ to
be bounded, but we further expect $\|w\|_2$ to be small.  In a
regression setting, with a vector of features $x$,
 this can be justified when $E[(x^{\top}w)^2]$ is
bounded (a reasonable assumption) and the features are not too
correlated---see, e.g. \cite{srebro_smooth}.  More broadly, especially in the presence
of correlations, we might require this as a modeling assumption to aid
in robustness and generalization.  In any case, we have
$\|w\|_1 \leq \|w\|_2 \sqrt{\|w\|_0}$,
and so if we are interested in predictors with bounded $\ell_2$ norm,
 we can motivate the $\ell_1$ norm through the following
relaxation of sparsity, where the scale is now set by the $\ell_2$
norm:
\begin{equation}
\big\{ w
  \,\big|\, \|w\|_0 \leq k, \|w\|_2 \leq B \big\}
\subseteq \big\{ w    \,\big|\, \|w\|_1 \leq B \sqrt{k} \big\}\;.
\end{equation}
The sample complexity when using the relaxation now scales
as\footnote{More precisely, the sample complexity is $O(B^2 k \log d)$, where the
  dependence on $B^2$ is to be expected.  Note that if feature vectors
  are $\ell_{\infty}$-bounded (i.e.~individual features are bounded),
  the sample complexity when using only $\|w\|_2\leq B$ (without a
  sparsity or $\ell_1$ constraint) scales as $O(B^2 d)$.  That is,
  even after identifying the correct support, we still need a sample
  complexity that scales with $B^2$.} $O(k \log d)$.

\paragraph{Sparse + $\ell_2$ constraint.}  Our starting point is then
that of combining sparsity and $\ell_2$ regularization, and learning a
sparse predictor with small $\ell_2$ norm.  We are thus interested in
classes of the form
\begin{equation}
  \label{eq:02vs1}
  S^{(2)}_k:=\left\{ w
     \,\big|\,  \|w\|_0 \leq k, \|w\|_2 \leq 1 \right\}\;.
\end{equation}

As discussed above, the class $\{ \|w\|_1 \leq \sqrt{k} \}$ (corresponding to
the standard Lasso) provides a convex relaxation of $S^{(2)}_k$.
But it is clear that we can get a tighter convex relaxation by
keeping the $\ell_2$ constraint as well:
\begin{equation}
  \label{eq:02vsENvs1}
S^{(2)}_k
\subseteq \left\{ w  \,\big|\, \|w\|_1 \leq\sqrt{k},
  \|w\|_2\leq 1 \right\}
\subsetneq \left\{ w  \,\big|\, \|w\|_1 \leq \sqrt{k} \right\}\;.
\end{equation}
Constraining (or equivalently, penalizing) both the $\ell_1$ and
$\ell_2$ norms, as in \eqref{eq:02vsENvs1}, is known as the ``elastic
net'' \cite{elastic_demol,elastic} and has indeed been advocated as a better
alternative to the Lasso.  In this paper, we ask
whether the elastic net is the {\em tightest} convex relaxation to sparsity
plus $\ell_2$ (that is, to $S^{(2)}_k$) or whether a
tighter, and better, convex relaxation is possible.

\paragraph{A new norm.}
We consider the convex hull (tightest convex outer bound) of $S^{(2)}_k$,
\begin{equation}
  \label{eq:conv02}
 C_k:=\conv(S^{(2)}_k)= \conv \left\{ w \,\big|\,  \|w\|_0 \leq k, \|w\|_2 \leq 1 \right\}\;.
\end{equation}
We study the gauge function associated with this convex set, that is, the
norm whose unit ball is given by \eqref{eq:conv02}, which we call the
{\em $k$-support norm}.  We show that, for $k>1$, this is indeed a tighter
convex relaxation than the elastic net (that is, both inequalities in
\eqref{eq:02vsENvs1} are in fact strict inequalities), and is therefore
a better convex constraint than the elastic net when seeking a
sparse, low $\ell_2$-norm linear predictor.  We thus advocate using it
as a replacement for the elastic net.

However, we also show that the gap between the elastic net and the
$k$-support norm is at most a factor of $\sqrt{2}$, corresponding to a
factor of two difference in the sample complexity. Thus, our work can
also be interpreted as justifying the use of the elastic net, viewing it as
a fairly good approximation to the tightest possible convex relaxation
of sparsity intersected with an $\ell_2$ constraint.  Still, even a factor of two
should not necessarily be ignored and, as we show in our experiments,
using the tighter $k$-support norm can indeed be beneficial.

To better understand the $k$-support norm, we show in Section
\ref{sec:def} that it can also be described as the group lasso with
overlaps norm \cite{jacob2009group} corresponding to all
$\binom{d}{k}$ subsets of $k$ features. Despite the exponential number
of groups in this description, we show that the $k$-support norm can
be calculated efficiently in time $O(d \log d)$ and that its dual is
given simply by the $\ell_2$ norm of the $k$ largest entries.  We also provide efficient first-order optimization algorithms for
learning with the $k$-support norm.



\paragraph{Related Work}


In many learning problems of interest, Lasso has
been observed to shrink too many of the variables of $w$ to
zero. In particular, in many applications, when a group of variables is highly correlated,
the Lasso may prefer a sparse solution, but we might gain more predictive
accuracy by including all the correlated variables in our model.
These drawbacks have recently motivated the use of various other regularization methods,
such as the {\em elastic net} \cite{elastic}, 
which penalizes the regression coefficients $w$ with
a combination of $\ell_1$ and $\ell_2$ norms:
\beq
\min \left\{ \frac{1}{2} \|Xw-y\|^2 + \lambda_1 \, \|w\|_1 + 
\lambda_2 \, \|w\|_2^2 : w \in \R^d \right\} \,,
\label{eq:def_elastic}
\eeq
where for a sample of size $n$, $y\in\R^n$ is the vector of response values,
and $X\in\R^{n\times d}$ is a matrix with column $j$ containing the values of 
feature $j$.

The elastic net can  be viewed as a trade-off between $\ell_1$ regularization (the Lasso)
and $\ell_2$ regularization (Ridge regression \cite{hoerl1970ridge}),
depending on the relative values of $\lambda_1$ and $\lambda_2$. In particular, when $\lambda_2=0$,
\eqref{eq:def_elastic} is equivalent to the Lasso.
This method, and the other methods discussed below, 
 have  been observed to significantly outperform Lasso in many real applications.

The pairwise elastic net (PEN), proposed by \cite{pen}, has
a penalty function that accounts for similarity among features:
\beq
\|w\|^{PEN}_{R}= \|w\|^2_2 + \|w\|^2_1 -|w|^{\top}R|w|\;,\eeq
where $R\in[0,1]^{p\times p}$ is a matrix with $R_{jk}$ 
measuring similarity between features $X_j$ and $X_k$.
The trace Lasso \cite{trace_lasso} is a second method proposed to handle correlations 
within $X$,  defined by
\beq
\|w\|^{trace}_X=\|X\text{diag}(w)\|_*\;,\eeq
where $\|\cdot\|_*$ denotes the matrix trace-norm (the sum of the singular
values) and promotes a low-rank solution. 
If the features are orthogonal, then both the PEN and the Trace Lasso are
equivalent to the Lasso. If the features are all identical, then both penalties
are equivalent to Ridge regression (penalizing $\|w\|_2$).
Another existing penalty is OSCAR \cite{oscar}, given by
\beq
\|w\|^{OSCAR}_c=\|w\|_1+c\sum_{j<k}\max\{|w_j|,|w_k|\}\;.\eeq
Like the elastic net, each one of these three methods also ``prefers''
averaging similar features over selecting a single feature.


\section{The $k$-Support Norm}
\label{sec:def}

One argument for the elastic net has been the flexibility of tuning 
the cardinality $k$ of the regression vector $w$. Thus, when groups
of correlated variables are present, a larger $k$ may be learned, 
which corresponds to a higher $\lambda_2$ in \eqref{eq:def_elastic}.
A more natural way to obtain such an effect of tuning the cardinality
is to consider the convex hull of cardinality $k$ vectors,
\beq
C_k = \conv(S^{(2)}_k)= \conv \{ w\in\R^d \,\big|\, \|w\|_0\leq k, \|w\|_2 \leq 1 \}.
\eeq
Clearly the sets $C_k$ are nested, and $C_1$ and $C_d$ are the unit balls
for the $\ell_1$ and $\ell_2$ norms, respectively.
Consequently we define the {\em $k$-support norm} as the norm whose
unit ball equals $C_k$ (the gauge function associated with the $C_k$ ball).\footnote{
The gauge function $\gamma_{C_k}: \R^d\to\R\cup\{+\infty\}$ is defined as
$\gamma_{C_k}(x) = \inf\{\lambda \in \R_+ : x \in \lambda C_k\}$.}
An equivalent definition is the following variational formula:

\begin{definition}
Let $k\in\{1,\dots,d\}$. The {\em $k$-support norm} $\|\cdot\|^{sp}_k$
is defined, for every $w\in\R^d$, as  
\beq\|w\|^{sp}_k := \min \Biggl\{\! \sum\limits_{I \in \calG_k} \|v_I\|_2
: \supp(v_I) \subseteq I, 
\sum\limits_{I \in \calG_k} v_I = w \!\Biggr\} \;,
\eeq
where $\calG_k$ denotes the set of all subsets of 
$\{1,\dots,d\}$ of cardinality at most $k$.
\label{def:overlap}
\end{definition}
The equivalence is immediate by rewriting $v_I=\mu_I z_I$ in the above
definition, where $\mu_I \geq 0, z_I \in C_k, \forall I\in\calG_k$,
$\sum_{I \in \calG_k}\mu_I = 1$. In addition, this immediately implies
that $\|\cdot\|^{sp}_k$ is indeed a norm. In fact, the $k$-support
norm is equivalent to the norm used by the group lasso with overlaps
\cite{jacob2009group}, when the set of overlapping groups is chosen to
be $\calG_k$ (however, the group lasso has traditionally been used for
applications with some specific known group structure, unlike the case
considered here).

Although the variational definition \ref{def:overlap} is not amenable to computation because of the 
exponential growth of the set of groups $\calG_k$, the $k$-support norm
is computationally very tractable, with an $O(d\log d)$ algorithm described 
in Section \ref{sec:computation}.

As already mentioned, $\|\cdot\|_1^{sp} = \|\cdot\|_1$ and
$\|\cdot\|_d^{sp} = \|\cdot\|_2$.  The unit ball of this new norm in
$\R^3$ for $k=2$ is depicted in Figure \ref{fig:ball}. We immediately
notice several differences between this unit ball and the elastic net
unit ball. For example, at points with cardinality $k$ and $\ell_2$
norm equal to $1$, the $k$-support norm is not differentiable, but
unlike the $\ell_1$ or elastic-net norm, it {\em is} differentiable at
points with cardinality less than $k$. Thus, the $k$-support norm is
less ``biased'' towards sparse vectors than the elastic net and the
$\ell_1$ norm.
\begin{figure}[t]
\begin{center}
\includegraphics[width=0.2\textwidth]{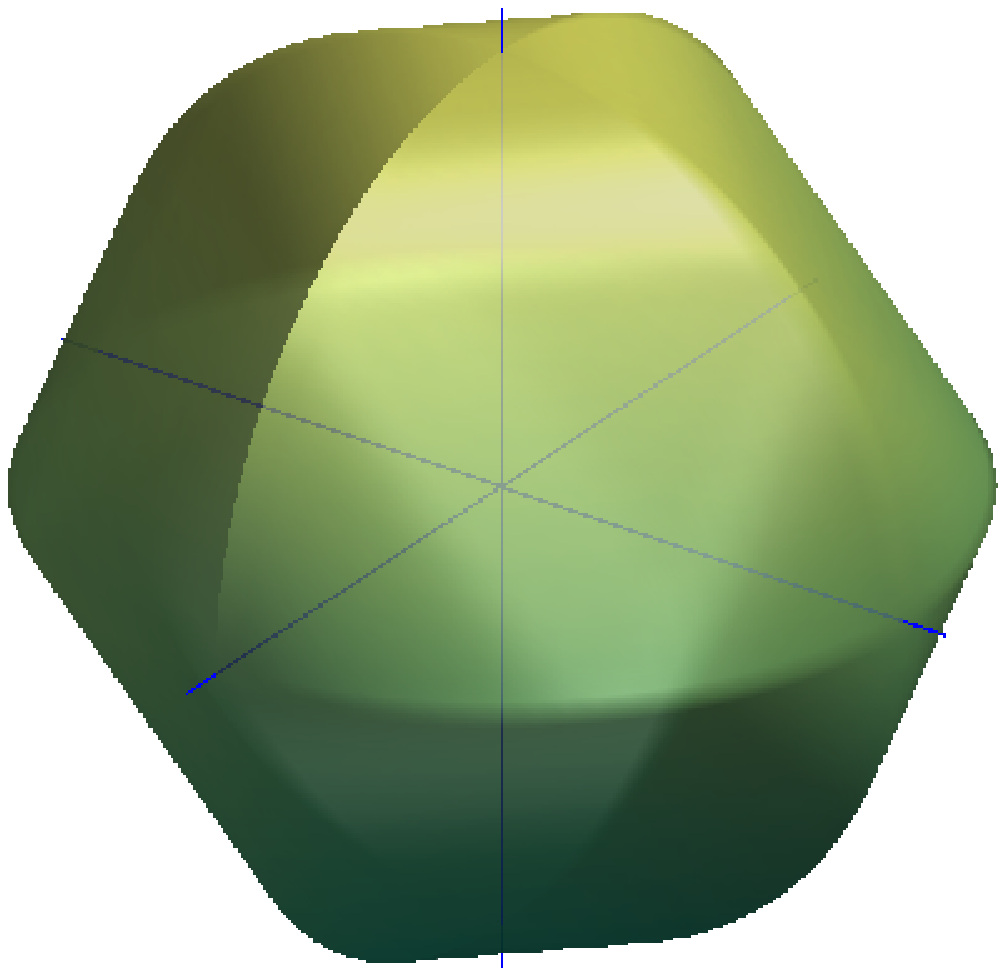}
\hspace{0.2\textwidth}
\includegraphics[width=0.18\textwidth]{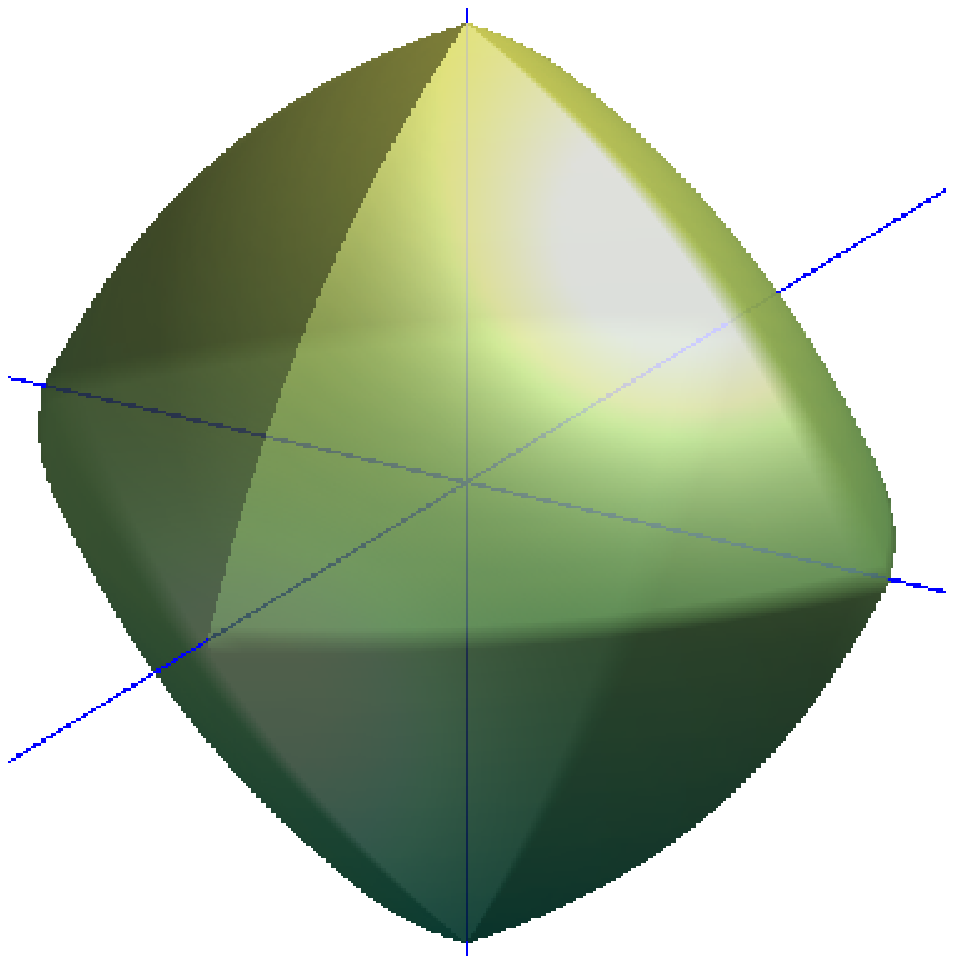}
\end{center}
\vspace{-0.2in}
\small\caption{\small Unit ball of the $2$-support norm (left) 
and of the elastic net (right) on $\R^3$.}
\label{fig:ball}
\end{figure}



\subsection{The Dual Norm}

It is interesting and useful to compute the dual of the $k$-support norm.
We follow the notation of \cite{bhatia} for ordered vectors:
 for any $w\in\R^d$, $|w|$ is
the vector of absolute values,
and $w_i\down$ is the $i$-th largest element of $w$.
We have
 \begin{align}
    \|u\|^{sp^*}_k  = \max \left\{\langle w , u \rangle :
   \|w\|^{sp}_k \leq 1 \right\} & =
 \max \left\{ \left( \sum\limits_{i\in I} u_{i}^2 \right)^\frac{1}{2}
: I \in \calG_k \right\} \label{eq:max} \\
& = 
\left( \sum\limits_{i=1}^k (|u|^\downarrow_i)^2 \right)^\frac{1}{2}
=: \|u\|_{(k)}^{(2)} \;.
\end{align}
This is the $\ell_2$-norm of the largest $k$ entries in
$u$, and is known as the $2$-$k$ {\em symmetric gauge norm}
\cite{bhatia}.

Not surprisingly, this dual norm interpolates between the $\ell_2$ norm
(when $k=d$ and all entries are taken) and the $\ell_{\infty}$ norm
(when $k=1$ and only the largest entry is taken).  This parallels the
interpolation of the $k$-support norm between the $\ell_1$ and
$\ell_2$ norms.

Like the $\ell_p$ norms and elastic net, the $k$-support norm and its
dual are {\em symmetric gauge functions}, that is, sign- and
permutation-invariant norms. For properties of such norms, see
\cite{bhatia}.

\subsection{Computation of the Norm}
\label{sec:computation}

In this section, we derive an alternative formula for the
$k$-support norm, which leads to computation of the value of the norm
in $O(d \log d)$ steps. 

\begin{proposition}\label{prop:norm}
For every $w\in\R^d$,
\beq
\|w\|_k^{sp} = \left( \sum\limits_{i=1}^{k-r-1} (|w|\down_i)^2 + 
\frac{1}{r+1} \left(\sum\limits_{i=k-r}^d |w|_i\down\right)^2
\right)^\frac{1}{2}\;,\eeq
where, letting $|w|\down_0$ denote $+\infty$, $r$ is the unique integer in $\{0,\dots,k-1\}$
satisfying
\beq
|w|\down_{k-r-1} ~ > ~ 
\frac{1}{r+1}\sum\limits_{i=k-r}^d |w|_i\down 
~\geq ~ |w|\down_{k-r} \;.
\label{eq:cond}
\eeq 
\end{proposition}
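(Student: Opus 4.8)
The plan is to work with the dual characterization. Since $\|\cdot\|_k^{sp}$ is a norm whose dual has been identified above as $\|u\|^{sp^*}_k=\|u\|_{(k)}^{(2)}$, the bidual recovers the original norm, so
\[
\|w\|_k^{sp}=\max\bigl\{\langle w,u\rangle : \|u\|_{(k)}^{(2)}\le 1\bigr\}.
\]
Because both norms are symmetric gauge functions, I may assume $w_1\ge w_2\ge\cdots\ge w_d\ge 0$, and then by sign invariance of the constraint and the rearrangement inequality the maximizer $u$ may be taken with $u_1\ge\cdots\ge u_d\ge 0$. With this ordering the constraint reads $\sum_{i=1}^k u_i^2\le 1$, and the tail coordinates $u_{k+1},\dots,u_d$ do not affect it at all; since the corresponding $w_i$ are nonnegative they are pushed up against the ordering constraints. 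Thus the problem becomes a concave maximization over an ordered cone subject to a single quadratic constraint.

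Next I would solve this reduced problem through its KKT system, positing the block structure the claimed formula suggests: the top coordinates $i\le k-r-1$ are interior to the ordering cone and obey the stationarity condition $w_i=\lambda u_i$, while the coordinates $i=k-r,\dots,d$ are pooled to a common value $t$ by active ordering constraints. Stationarity on the free block gives $u_i=w_i/\lambda$. Summing the stationarity conditions over the pooled block—where exactly the $r+1$ indices $k-r,\dots,k$ enter the quadratic constraint—yields $t=\frac{1}{\lambda(r+1)}\sum_{i=k-r}^d |w|\down_i$. Imposing the active constraint $\sum_{i=1}^k u_i^2=1$ then fixes
\[
\lambda=\Bigl(\sum_{i=1}^{k-r-1}(|w|\down_i)^2+\frac{1}{r+1}\bigl(\sum_{i=k-r}^d |w|\down_i\bigr)^2\Bigr)^{1/2},
\]
and a direct computation shows $\langle w,u\rangle=\lambda$ at this point, which is precisely the asserted value of $\|w\|_k^{sp}$.

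What remains—and is the crux of the argument—is to certify that this candidate is feasible and optimal, which is exactly where condition \eqref{eq:cond} enters. Feasibility at the junction between the free and pooled blocks, $u_{k-r-1}\ge u_{k-r}=t$, rearranges to the left inequality $|w|\down_{k-r-1}>\frac{1}{r+1}\sum_{i=k-r}^d |w|\down_i$. Optimality of the pooling is the statement that coordinate $k-r$ must be raised into the block rather than left free: were it free its value $w_{k-r}/\lambda$ would fall below the block value $t$ and violate the ordering, and the condition $w_{k-r}/\lambda\le t$ is exactly the right inequality $\frac{1}{r+1}\sum_{i=k-r}^d |w|\down_i\ge |w|\down_{k-r}$. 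I would verify that these two inequalities make every multiplier attached to the active ordering constraints nonnegative—essentially a pool-adjacent-violators / water-filling check—so that the KKT point is the global maximizer by concavity.

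Finally I would confirm that a unique $r\in\{0,\dots,k-1\}$ satisfies \eqref{eq:cond}. Setting $g(r):=\frac{1}{r+1}\sum_{i=k-r}^d |w|\down_i$, the two inequalities assert $|w|\down_{k-r}\le g(r)<|w|\down_{k-r-1}$; a monotonicity argument as the block grows, together with the convention $|w|\down_0=+\infty$ handling the case $r=0$, pins down exactly one admissible $r$. The main difficulty throughout is not any single estimate but the careful bookkeeping of the ordering constraints: correctly identifying the pooled block and checking that every associated KKT multiplier has the correct sign.
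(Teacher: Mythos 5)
Your proof is correct, and it reaches the formula by a genuinely different route than the paper's. The paper evaluates the Fenchel conjugate $\frac{1}{2}(\|w\|^{sp}_k)^2=\max_u\bigl\{\langle u,w\rangle-\frac{1}{2}(\|u\|^{(2)}_{(k)})^2\bigr\}$, reduces to ordered vectors via the same rearrangement inequality you invoke, and then \emph{constructs} the maximizer by a forward pool-adjacent-violators iteration: it starts from $\alpha_i=|w|^\downarrow_i$ and, whenever the ordering is violated at the junction, merges the trailing coordinates into a common value, terminating exactly when \eqref{eq:cond} holds---so existence of $r$ falls out as a by-product of termination. You instead maximize the linear functional $\langle w,u\rangle$ over the dual unit ball (the bidual) and certify a guessed block-structured KKT point; this buys a clean global-optimality argument by concavity and yields the norm itself rather than its square (via your identity $\langle w,u\rangle=\lambda$), but it obliges you to verify multiplier signs and to establish existence and uniqueness of $r$ separately, both of which you only sketch. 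For the record, both sketched steps do go through: writing $g(r)=\frac{1}{r+1}\sum_{j=k-r}^d|w|^\downarrow_j=\lambda t$, the stationarity recursion $\mu_i=\mu_{i-1}-|w|^\downarrow_i+\lambda u_i\,[i\le k]$ with $\mu_{k-r-1}=0$ gives $\mu_i=(i-k+r+1)\,g(r)-\sum_{j=k-r}^i|w|^\downarrow_j\ge 0$ for $k-r\le i\le k$, since each $|w|^\downarrow_j\le|w|^\downarrow_{k-r}\le g(r)$ by the right inequality of \eqref{eq:cond}, and $\mu_i=\sum_{j=i+1}^d|w|^\downarrow_j\ge 0$ for $i>k$, with $\mu_d=0$ as complementary slackness at $u_d=t$ requires; and uniqueness of $r$ follows because $g(r+1)$ is a convex combination of $|w|^\downarrow_{k-r-1}$ and $g(r)$, so once the left inequality $g(r)<|w|^\downarrow_{k-r-1}$ holds it propagates a failure of the right inequality for every larger $r$, while $g(0)\ge|w|^\downarrow_k$ and the convention $|w|^\downarrow_0=+\infty$ guarantee existence.
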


This result shows that $\|\cdot\|^{sp}_k$ trades off between the $\ell_1$ and
$\ell_2$ norms in a way that favors sparse vectors but allows for cardinality larger than $k$.
It combines the uniform shrinkage of an $\ell_2$ penalty 
for the largest components, with the sparse shrinkage of an $\ell_1$ penalty
for the smallest components.

\begin{proof}[{\bf Proof of Proposition \ref{prop:norm}}]
We will use the inequality 
$\lb w, u\rb \leq \lb w\down, u\down\rb$
 \cite{HLP}. We have
\begin{align}
\frac{1}{2}(\|w\|^{sp}_k)^2 
& = \max \left \{ \lb u,w\rb - \frac{1}{2} (\|u\|^{(2)}_{(k)})^2 :  u\in \R^d\right\} 
\\& = \max \Biggl\{ \sum_{i=1}^{d} \alpha_i |w|\down_i -
\frac{1}{2} \sum\limits_{i=1}^k \alpha_i^2 
: 
 \alpha_1\geq \dots\geq \alpha_d\geq 0 \Biggr\} 
 \\ & = \max \Biggl \{ \sum_{i=1}^{k-1} \alpha_i |w|\down_i 
+ \alpha_k \sum_{i=k}^{d}  |w|\down_i
-\frac{1}{2} \sum\limits_{i=1}^k \alpha_i^2 
:  \alpha_1\geq \dots\geq \alpha_k\geq 0 \Biggr\} \,.
\end{align}
Let $A_r : = \sum\limits_{i=k-r}^d |w|_i\down$
for $r\in\{0,\dots,k-1\}$.
If $A_0 < |w|_{k-1}\down$ then the solution $\alpha$ is given by
$\alpha_i = |w|_i\down$ for $i=1,\dots,(k-1)$, $\alpha_i = A_0$ for $i=k,\dots,d$.
If $A_0 \geq |w|_{k-1}\down$ then the optimal $\alpha_k$, $\alpha_{k-1}$
lie between $|w|_{k-1}\down$ and $A_0$, and have to be equal.
So, the maximization becomes
\beq
\max \Biggl \{ \sum_{i=1}^{k-2} \alpha_i |w|\down_i 
-\frac{1}{2} \sum\limits_{i=1}^{k-2} \alpha_i^2
+ A_1 \alpha_{k-1} -\alpha_{k-1}^2 
: 
\alpha_1\geq \dots\geq \alpha_{k-1}\geq 0 \Biggr\} \,.
\eeq
If $A_0 \geq |w|_{k-1}\down$ and $|w|_{k-2}\down > \frac{A_1}{2}$ then
the solution is
$\alpha_i = |w|_i\down$ for $i=1,\dots,(k-2)$, $\alpha_i = \frac{A_1}{2}$ for $i=(k-1),\dots,d$.
Otherwise we proceed as before and continue this process.
At stage $r$ the process terminates if
$A_0 \geq |w|_{k-1}\down, \dots, \frac{A_{r-1}}{r} \geq |w|_{k-r}\down$, $
\frac{A_r}{r+1} < |w|_{k-r-1}\down$
and all but the last two inequalities are redundant. Hence the condition
can be rewritten as \eqref{eq:cond}. One optimal solution is 
$\alpha_i = |w|_i\down$ for $i=1,\dots,k-r-1$, $\alpha_i = \frac{A_r}{r+1}$ for $i=k-r,\dots,d$.
This proves the claim.
\end{proof}

\subsection{Learning with the $k$-support norm}

We thus propose using learning rules with $k$-support norm
regularization.  These are appropriate when we would like to learn a
sparse predictor that also has low $\ell_2$ norm, and are especially
relevant when features might be correlated (that is, in almost all
learning tasks) but the correlation structure is not known in
advance.  For regression problems with squared error loss, the
resulting learning rule is of the form
\beq \min
\left\{ \frac{1}{2} \|Xw-y\|^2 + \frac{\lambda}{2} \,
  \left(\|w\|^{sp}_k\right)^2 : w \in \R^d \right\}
\label{eq:support}
\eeq
with $\lambda > 0$ a regularization parameter and $k\in\{1,\dots,d\}$ 
also a parameter to be tuned. As typical in regularization-based methods,
both $\lambda$ and $k$ can be selected by cross validation \cite{hastie}.
Although we have motivated this norm by considering $S_k^{(2)}$, the set of
$k$-sparse unit vectors, the parameter $k$ does not necessarily correspond
to the sparsity level of the fitted vector of coefficients,
and should be chosen via cross-validation independently of the desired sparsity level.

\section{Relation to the Elastic Net}
\label{sec:elastic}

Recall that the elastic net with penalty parameters $\lambda_1$ and $\lambda_2$
selects a vector of coefficients given by
\beq
\arg\min \left\{ \frac{1}{2} \|Xw-y\|^2 + \lambda_1 \, \|w\|_1 + 
\lambda_2 \, \|w\|_2^2 \right\} \,. 
\label{eq:elastic}
\eeq

For ease of comparison with the $k$-support  norm, we first show that
the set of optimal solutions for the elastic net, when the parameters are varied, is the same
as for the norm
\beq
\|w\|^{el}_k := \max \left\{\|w\|_2, \frac{\|w\|_1}{\sqrt{k}} \right\} \;,
\label{eq:el_norm}
\eeq when $k\in[1,d]$, corresponding to the unit ball in
\eqref{eq:02vsENvs1} (note that $k$ is not necessarily an integer). To see
this, let $\hat{w}$ be a solution to (\ref{eq:elastic}), and let
$k:=({\|\hat{w}\|_1}/{\|\hat{w}\|_2})^2\in[1,d]\;$.

Then for any $w\neq \hat{w}$, if $\|w\|^{el}_k\leq
\|\hat{w}\|^{el}_k$, then $\|w\|_p\leq \|\hat{w}\|_p$ for $p=1,2$.
Since $\hat{w}$ is a solution to \eqref{eq:elastic}, therefore,
$\|Xw-y\|^2_2\geq \|X\hat{w}-y\|^2_2$. This proves that, for some
constraint parameter $B$, \beq
\hat{w}=\arg\min\left\{\frac{1}{n}\|Xw-y\|^2_2 \, : \,
  \|w\|^{el}_k\leq B\right\}\;. \eeq

Like the $k$-support norm, the elastic net interpolates between the $\ell_1$ and $\ell_2$ norms. In fact, when $k$ is an integer, any $k$-sparse unit vector $w\in\R^d$ must lie in the unit ball of $\|\cdot \|^{el}_k$. Since the $k$-support norm gives the convex hull of all $k$-sparse unit vectors, this immediately implies that
\beq
\|w\|^{el}_k\leq \|w\|^{sp}_k \quad\quad \forall \ w\in\R^d\;.
\eeq
The two norms are not equal, however.
The difference between the two is illustrated in Figure
\ref{fig:ball}, where we see that the $k$-support norm is more
``rounded''. 

To see an example where the two norms are not equal, we set $d=1+k^2$ for some
large $k$, and let
$w=(k^{1.5},1,1,\dots,1)^{\top}\in\R^d$.
Then
\beq\|w\|^{el}_k=\max\left\{\sqrt{k^3+k^2},\frac{k^{1.5}+k^2}{\sqrt{k}}\right\}=k^{1.5}\left(1+\frac{1}{\sqrt{k}}\right).\eeq
Taking $u=(\frac{1}{\sqrt{2}}, \frac{1}{\sqrt{2k}},\frac{1}{\sqrt{2k}}, \dots, \frac{1}{\sqrt{2k}})^{\top}$, we have 
$\|u\|^{(2)}_{(k)}< 1$,
and recalling this norm is dual to the $k$-support norm:
\beq\|w\|^{sp}_k> \langle w,u\rangle =\frac{k^{1.5}}{\sqrt{2}}+k^2\cdot\frac{1}{\sqrt{2k}}=\sqrt{2}\cdot k^{1.5}\;.\eeq
In this example, we see that the two norms can differ by as much as 
a factor of $\sqrt{2}$.
We now show that this is actually the most by which they can differ.
\begin{proposition}
$\|\cdot\|^{el}_k \leq \|\cdot\|_k^{sp} < \sqrt{2} \, \|\cdot\|^{el}_k$.
\label{prop:bound}
\end{proposition}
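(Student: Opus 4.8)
The left inequality $\|\cdot\|^{el}_k \le \|\cdot\|^{sp}_k$ has already been established in the text (every $k$-sparse unit vector lies in the unit ball of $\|\cdot\|^{el}_k$, and $C_k$ is the convex hull of such vectors), so my plan is to prove only the strict upper bound $\|w\|^{sp}_k < \sqrt2\,\|w\|^{el}_k$ for $w \ne 0$. Since both norms are symmetric gauge functions, I would assume without loss of generality that $w_1 \ge \dots \ge w_d \ge 0$. The central idea is to avoid comparing against the $\max$ defining $\|w\|^{el}_k$ directly, and instead prove the cleaner intermediate bound
\[
(\|w\|^{sp}_k)^2 \;<\; \|w\|_2^2 + \tfrac{1}{k}\|w\|_1^2 \;\le\; 2\max\Big\{\|w\|_2^2,\tfrac{1}{k}\|w\|_1^2\Big\} = 2(\|w\|^{el}_k)^2,
\]
where the last inequality is just the elementary fact that a sum of two nonnegative numbers is at most twice their maximum. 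Taking square roots then yields the proposition.

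To prove the first (strict) inequality I would invoke the explicit formula of Proposition \ref{prop:norm}. Writing $m := k-r-1$, $t := \frac{1}{r+1}\sum_{i=k-r}^d w_i$, $L := \sum_{i=1}^m w_i^2$ and $P := \sum_{i=1}^m w_i$, the formula reads $(\|w\|^{sp}_k)^2 = L + (r+1)t^2$, while $\|w\|_2^2 = L + \sum_{i>m} w_i^2$ and $\|w\|_1 = P + (r+1)t$. After cancelling the common term $L$, the target inequality reduces to
\[
(r+1)t^2 \;<\; \sum_{i>m} w_i^2 + \tfrac1k\,(P + (r+1)t)^2 .
\]
The key structural input is the optimality condition \eqref{eq:cond}: its left half gives $w_i \ge w_m > t$ for every $i \le m$, hence $P \ge m\,w_m > mt$ when $m \ge 1$. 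Combined with the identity $m + (r+1) = k$, this produces $\|w\|_1 = P + (r+1)t \ge kt$, so that $\frac1k\|w\|_1^2 \ge k t^2 \ge (r+1)t^2$. This already establishes the inequality non-strictly, and discarding the nonnegative tail term $\sum_{i>m}w_i^2$ is where the remaining slack sits.

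I would close by upgrading to a strict inequality, the one place a small case split is convenient. If $m \ge 1$, then $P > mt$ is strict, so $\frac1k\|w\|_1^2 > (r+1)t^2$ and strictness holds irrespective of the tail. If $m = 0$, then $r+1 = k$ forces $\frac1k\|w\|_1^2 = k t^2 = (r+1)t^2$ with equality, but now $\sum_{i>m} w_i^2 = \|w\|_2^2 > 0$ for $w \ne 0$, which restores strictness. The example $w = (k^{1.5},1,\dots,1)$ preceding the proposition shows that $\sqrt2$ is asymptotically attained, so the constant cannot be improved. The only real (and mild) obstacle is recognizing that the correct reduction is to $\|w\|_2^2 + \frac1k\|w\|_1^2$ rather than to the $\max$ itself; once this is in place, condition \eqref{eq:cond} supplies the pointwise estimate $w_i > t$ on the large coordinates and the rest is elementary.
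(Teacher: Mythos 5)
Your proof is correct, but it takes a genuinely different route from the paper's. The paper works entirely in the dual: it writes the dual of $\|\cdot\|^{el}_k$ as the infimal convolution $\|u\|^{(el)^*}_k=\inf_a\{\|a\|_2+\sqrt{k}\,\|u-a\|_\infty\}$, proves $\|u\|^{(2)}_{(k)}\leq\|u\|^{(el)^*}_k$ by the triangle inequality on the top $k$ coordinates (thereby re-deriving the left inequality, which you instead correctly defer to the discussion preceding the proposition), and obtains the factor $\sqrt{2}$ via the soft-thresholded choice $a=(u_1-u_{k+1},\dots,u_k-u_{k+1},0,\dots,0)$ together with $x+y\leq\sqrt{2}\sqrt{x^2+y^2}$, with strictness from a case split on $u_1>u_{k+1}$ versus $u_1=\dots=u_{k+1}$. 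You instead stay in the primal and lean on the explicit formula of Proposition \ref{prop:norm}, proving the stronger intermediate bound $(\|w\|^{sp}_k)^2<\|w\|_2^2+\tfrac1k\|w\|_1^2\leq 2(\|w\|^{el}_k)^2$, with the optimality condition \eqref{eq:cond} supplying $w_i>t$ on the top $m=k-r-1$ coordinates and the identity $m+(r+1)=k$ doing the bookkeeping. I verified the details: when $m\geq 1$ you get $\|w\|_1=P+(r+1)t>kt\geq 0$, hence $\tfrac1k\|w\|_1^2>kt^2\geq(r+1)t^2$; when $m=0$ you have $r+1=k$ and exact equality $\tfrac1k\|w\|_1^2=(r+1)t^2$, but the discarded tail $\|w\|_2^2>0$ restores strictness for $w\neq 0$ --- so the argument is sound, matching the paper's implicit exclusion of the zero vector. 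As for what each approach buys: the paper's dual argument is self-contained given the dual-norm formula, avoids the case structure of the norm computation, and makes the origin of $\sqrt{2}$ transparent; yours yields the quantitatively sharper and arguably more interpretable statement that the squared $k$-support norm lies strictly below the additive combination $\|w\|_2^2+\tfrac1k\|w\|_1^2$ (the actual elastic-net penalty form, not merely twice the max), at the cost of depending on Proposition \ref{prop:norm} and its optimality condition.
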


\begin{proof}
We show that these bounds hold in the duals of the two norms. 
First, since $\|\cdot\|^{el}_k$ is a maximum over the $\ell_1$ and $\ell_2$ norms,
its dual is given by
\beq
\|u\|^{(el)^*}_k:=\inf_{a\in\R^d} \left\{\|a\|_2 + \sqrt{k}\cdot \|u-a\|_{\infty}\right\}\;
\eeq
Now take any $u\in\R^d$. First we show
$\|u\|^{(2)}_{(k)}\leq \|u\|^{(el)^*}_k$.
Without loss of generality, we take $u_1\geq \dots \geq u_d\geq 0$. 
For any $a\in \R^d$,
\beq
\|u\|_{(k)}^{(2)} =\|u_{1:k}\|_2 \leq \|a_{1:k}\|_2 + \|u_{1:k} - a_{1:k}\|_2
\leq \|a\|_2 + \sqrt{k} \|u-a\|_\infty \,. 
\eeq
Finally, we show that $ \|u\|^{(el)^*}_k< \sqrt{2}\,\|u\|^{(2)}_{(k)}$.
Taking
$a=(u_1-u_{k+1},\dots,u_k-u_{k+1},0,\dots,0)^{\top}$,
we have
\beq\begin{split}
\|u\|^{(el)^*}_k&\leq \|a\|_2+\sqrt{k}\cdot\|u-a\|_{\infty}
=\sqrt{\sum_{i=1}^k (u_i-u_{k+1})^2} + \sqrt{k}|u_{k+1}| \\
&\leq \sqrt{\sum_{i=1}^k (u_i^2-u_{k+1}^2)} + \sqrt{k\, u_{k+1}^2}
\leq \sqrt{2}\cdot \sqrt{\sum_{i=1}^k (u_i^2-u_{k+1}^2)  + k\, u_{k+1}^2}
\\ & =\sqrt{2}\, \|u\|^{(2)}_{(k)}\;.
\end{split}\eeq

Furthermore, this yields a strict inequality, because if $u_1>u_{k+1}$,
the next-to-last inequality is strict, while if $u_1=\dots=u_{k+1}$, then
the last inequality is strict.
\end{proof}


\section{Optimization}
\label{sec:prox}

Solving the optimization problem \eqref{eq:support} efficiently
can be done with a first-order proximal algorithm. 
Proximal methods -- see
\cite{fista,combettes,Nesterov07,tseng08,tseng10} and references
therein -- are used to solve composite problems of the form
$\min\{f(x) + \omega(x): x\in\R^d\}$, where the loss function $f(x)$
and the regularizer $\omega(x)$ 
are convex functions, and $f$ is smooth
with an $L$-Lipschitz gradient.
These methods require fast computation of the gradient $\nabla f$
and the {\em proximity operator}
\beq \prox_\omega(x)
:= \argmin 
\left\{ \dfrac{1}{2} \|u-x\|^2 + \reg(u) : u \in \rd \right\}\;.\eeq
In particular, {\em accelerated first-order methods}, proposed by Nesterov
\cite{nesterov2005smooth,Nesterov07} require two levels of memory 
at each iteration and exhibit an optimal  $O\left(\frac{1}{T^2}\right)$ 
convergence rate for the objective after $T$ iterations.

To obtain a proximal method for $k$-support regularization,
it suffices to compute the proximity map of 
$g = \frac{1}{2L}(\|\cdot\|^{sp}_k)^2$, for any $L>0$.
This can be done in $O(d(k+\log d))$ steps
with Algorithm \ref{alg:prox}.

\begin{algorithm}[th]
\caption{Computation of the proximity operator.}
\begin{algorithmic}
\STATE {\bf Input} $v \in\rd$ 
\STATE {\bf Output} $q = \prox_{\frac{1}{2L}(\|\cdot\|^{sp}_k)^2}(v)$
\STATE Find $r\in\{0,\dots,k-1\}$, $\ell \in \{k, \dots, d\}$
such that 
\beq
\tfrac{1}{L+1} z_{k-r-1} > \tfrac{ T_{r,\ell}}{\ell-k+(L+1)r+L+1} 
\geq \tfrac{1}{L+1}z_{k-r}
\label{eq:prox_cond1}
\eeq
\beq
z_{\ell} > \tfrac{T_{r,\ell}}{\ell-k+(L+1)r+L+1} \geq z_{\ell+1}
\label{eq:prox_cond2}
\eeq
where $z := |v|\down$, $z_0 := +\infty$, $z_{d+1} := -\infty$,~ 
$T_{r,\ell} := \sum\limits_{i=k-r}^\ell z_i$ 
\STATE $q_i \leftarrow \begin{cases}
\tfrac{L}{L+1}z_i & \text{if}~ { i=1,\dots,k-r-1} \\
z_i - \tfrac{T_{r,\ell}}{\ell-k+(L+1)r+L+1}
& \text{if}~ {i=k-r,\dots,\ell} \\
0 & \text{if}~  { i=\ell+1,\dots,d}
\end{cases}$
\STATE Reorder and change signs of $q$ to conform with $v$ 
\end{algorithmic}
\label{alg:prox}
\end{algorithm}

\begin{algorithm}[t]
\caption{Accelerated $k$-support regularization.}
\begin{algorithmic}
\STATE $w_1=\alpha_1 \in\rd$, $\theta_1 \leftarrow 1$
\FOR {t=1,2,\dots}
\STATE $\theta_{t+1} \leftarrow \tfrac{1+\sqrt{1+4\theta_t^2}}{2}$
\STATE $w_{t+1} \leftarrow \prox_{\frac{\lambda}{2L}(\|\cdot\|^{sp}_k)^2} 
\left(\alpha_t - \frac{1}{L}X\trans(X\alpha_t-y)\right)$
using Algorithm \ref{alg:prox}
\STATE $\alpha_{t+1} \leftarrow  w_{t+1} +
\tfrac{\theta_t-1}{\theta_{t+1}} (w_{t+1}-w_t)$
\ENDFOR 
\end{algorithmic}
\label{alg:nest}
\end{algorithm}

\begin{proof}[{\bf Proof of Correctness of Algorithm \ref{alg:prox}}]
  Since the support-norm is sign and permutation invariant, $\prox(v)$
  has the same ordering and signs as $v$. 
  Hence, without loss of generality, we may assume that $v_1\geq
  \dots\geq v_d\geq 0$ and require that $q_1\geq \dots\geq q_d\geq 0$,
  which follows from inequality \eqref{eq:prox_cond1} and the
  fact that $z$ is ordered.

Now, $q=\prox(v)$ is equivalent to $Lz-Lq = Lv-Lq \in \partial 
\frac{1}{2}(\|\cdot\|^{sp}_k)^2(q)$. It suffices to show
that, for $w=q$, $Lz-Lq$ is an optimal $\alpha$ in the proof of Proposition \ref{prop:norm}.
Indeed, $A_r$ corresponds to $\sum\limits_{i=k-r}^d q_i = 
\sum\limits_{i=k-r}^\ell 
\left(z_i - \tfrac{T_{r,\ell}}{\ell-k+(L+1)r+L+1}\right) = 
 T_{r,\ell} - \tfrac{(\ell-k+r+1)T_{r,\ell}}{\ell-k+(L+1)r+L+1}
 = (r+1) \tfrac{L\,T_{r,\ell}}{\ell-k+(L+1)r+L+1}$ and
\eqref{eq:cond} is equivalent to condition \eqref{eq:prox_cond1}.
For $i\leq k-r-1$, we have $Lz_i-Lq_i = q_i$. For
 $k-r\leq i \leq \ell$, we have $Lz_i-Lq_i = \tfrac{1}{r+1}A_r$.
  For $i\geq \ell+1$, since $q_i=0$, we only need
     $Lz_i-Lq_i \leq \tfrac{1}{r+1}A_r$,
which is true by \eqref{eq:prox_cond2}.  
\end{proof}

We can now apply a standard accelerated proximal method, such as FISTA
\cite{fista}, to \eqref{eq:support}, at each iteration using the
gradient of the loss and performing a prox step using Algorithm
\ref{alg:prox}.  The FISTA guarantee ensures us that, with appropriate
step sizes, after $T$ such iterations, we have:
\begin{equation}
  \frac{1}{2} \|Xw_T-y\|^2 + \frac{\lambda}{2} \, \left(\|w_T\|^{sp}_k\right)^2
  \leq  \Biggl(\frac{1}{2} \|X w^*-y\|^2 +  \frac{\lambda}{2} \, \left(\|w^*\|^{sp}_k\right)^2\Biggr)
  + \frac{2L\|w^*-w_1\|^2}{(T+1)^2}\;.
\label{eq:nest}
\end{equation}


\section{Empirical Comparisons}
\label{sec:exp}

Our theoretical analysis indicates that the $k$-support norm and the
elastic net differ by at most a factor of $\sqrt{2}$, corresponding to
at most a factor of two difference in their sample complexities and
generalization guarantees.  We thus do not expect huge differences
between their actual performances, but would still like to see whether
the tighter relaxation of the $k$-support norm does yield some gains.

\paragraph{Synthetic Data}

For the first simulation we follow \cite[Sec. 5, example 4]{elastic}.
In this experimental protocol, the target (oracle) vector equals 
\beq
w^*
= (\underbrace{3,
  \dots,3}_\text{15},\underbrace{0\dots,0}_\text{25})\;,\eeq with
$y=(w^*)\trans x + \calN(0,1)$.

The input data $X$ were generated from a normal distribution
such that components $1,\dots,5$ have the same random mean $Z_1\sim\calN(0,1)$,
components $6,\dots,10$ have mean $Z_2\sim\calN(0,1)$ and components $11,\dots,15$ have mean $Z_3\sim\calN(0,1)$.  
A total of $50$ data sets were created in this way, each containing
50 training points, 50 validation points and 350 test points.  
The goal is to achieve good prediction performance on the test data.

We compared the $k$-support norm with Lasso and the elastic net. 
We considered the ranges $k=\{1,\dots,d\}$ for $k$-support norm regularization, 
$\lambda = 10^i$, $i=\{-15,\dots,5\},$ for the regularization parameter
of Lasso and $k$-support regularization and the same range for the $\lambda_1,\lambda_2$ 
of the elastic net. For each method, the optimal set of parameters
was selected based on mean squared error on the validation set.
The error reported in Table \ref{tab:mse}
is the mean squared error with respect to the oracle $w^*$, namely 
$MSE = (\hat{w}-w^*)\trans V (\hat{w}-w^*) $,
where $V$ is the population covariance matrix of $X_{test}$.

Beyond the predictive gains, to further illustrate the effect of the
$k$-support norm, in Figure \ref{fig:coeffs} we show the coefficients
learned by each method, in absolute value.  For each image, one row
corresponds to the $w$ learned for one of the $50$ data sets. Whereas
the elastic net can learn higher values at the relevant features, a
better feature pattern with less variability emerges when using the
$k$-support norm.

\begin{figure}[t]
\begin{center}
\includegraphics[width=0.2\textwidth]{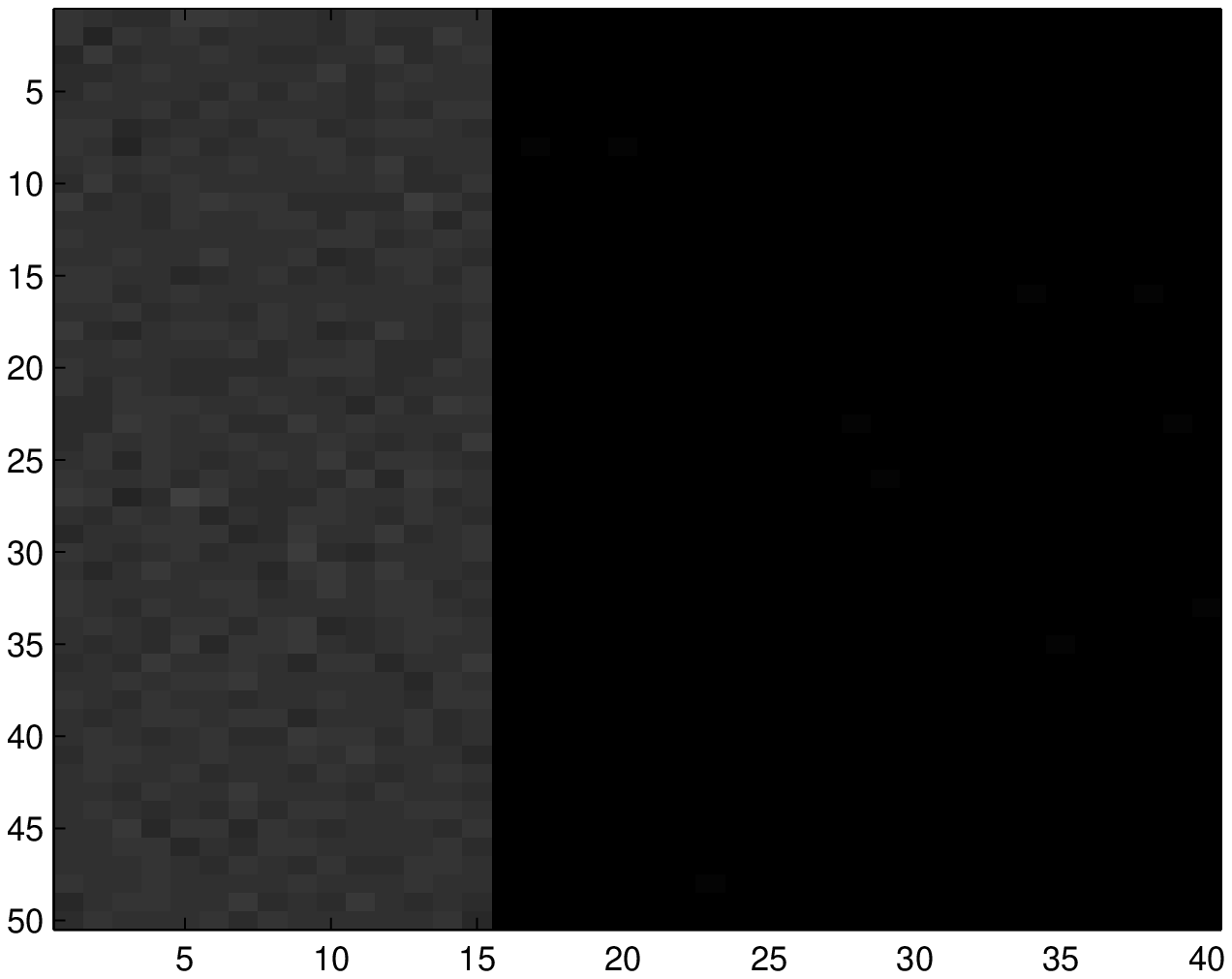}
\includegraphics[width=0.2\textwidth]{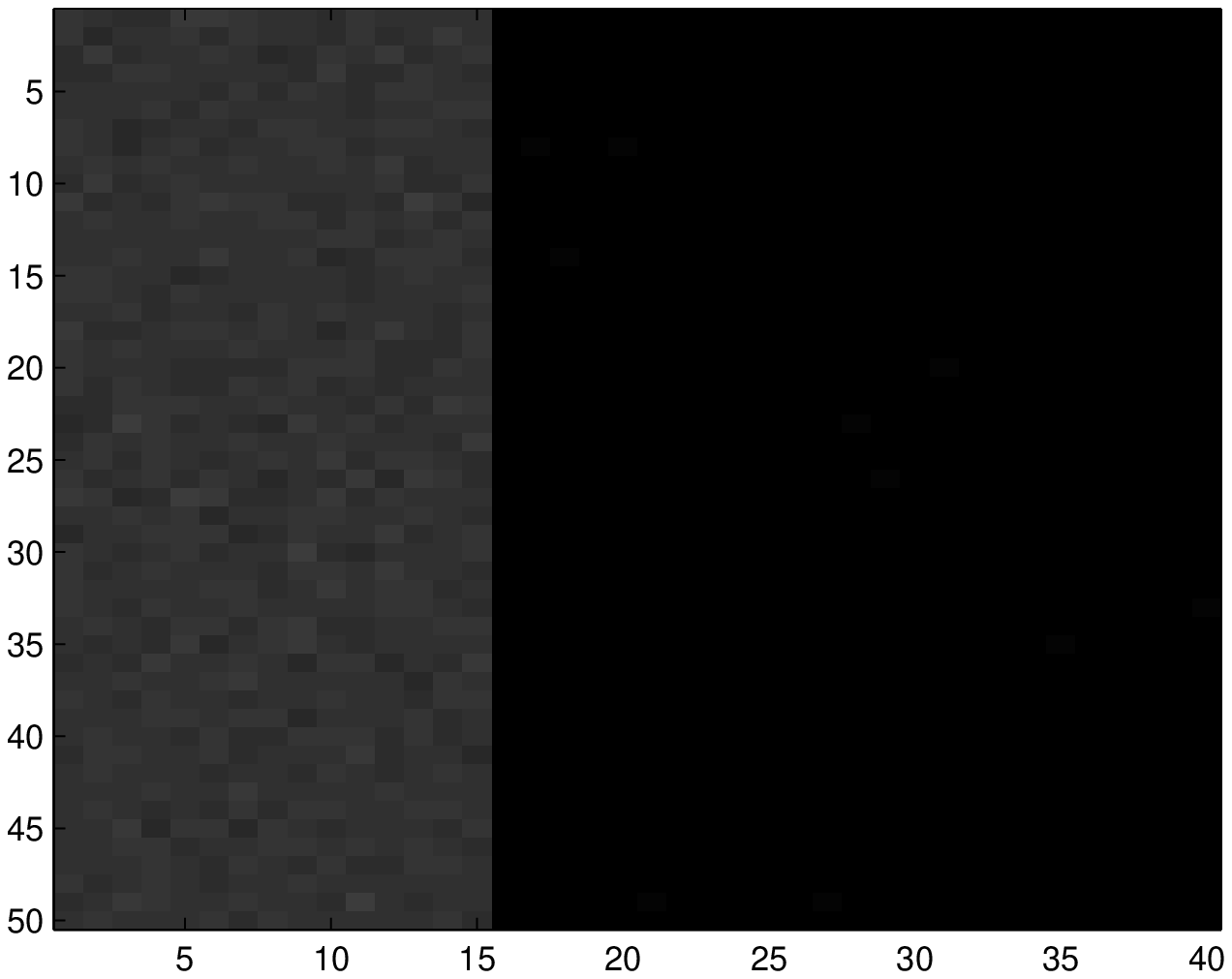}
\includegraphics[width=0.2\textwidth]{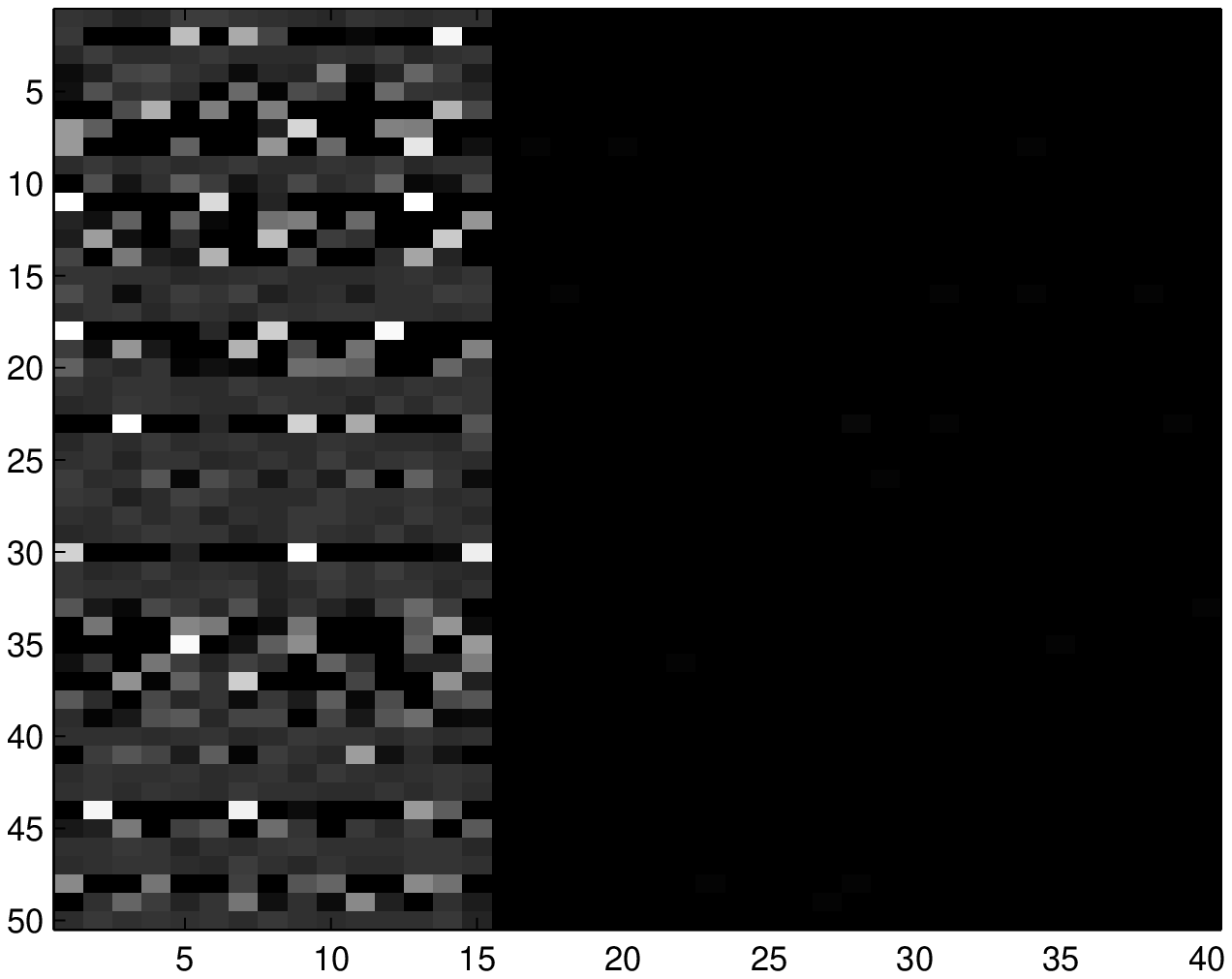}
\end{center}
{\small \caption{\small Solutions learned by each method for all the
    simulation data sets. Left to right: $k$-support, Lasso and elastic net.}}
\label{fig:coeffs}
\end{figure}

\paragraph{South African Heart Data}

This is a classification task which has been used in \cite{hastie}. There are 9 variables
and 462 examples, and the response is presence/absence of coronary heart disease.  
We normalized the data so that each predictor
variable has zero mean and unit variance.  We
then split the data 50 times randomly into training, validation, and test sets of
sizes 400, 30, and 32 respectively. For each method, parameters 
were selected using the validation data. In Tables \ref{tab:mse}, 
we report the MSE and accuracy of each method on the test data.
We observe that all three methods
have identical performance. 




\paragraph{20 Newsgroups}

This is a binary classification version of 20 newsgroups created in
\cite{keerthi_coste} which can be found in the LIBSVM data
repository.\footnote{\texttt{http://www.csie.ntu.edu.tw/$\sim$cjlin/libsvmtools/datasets/}}
The positive class consists of the 10 groups with names of form sci.*,
comp.*, or misc.forsale and the negative class consists of the other
10 groups.  To reduce the number of features, we removed the words
which appear in less than $3$ documents. We randomly split the data
into a training, a validation and a test set of sizes 14000,1000 and
4996, respectively.  We report MSE and accuracy on the test data in Table
\ref{tab:mse}. We
found that $k$-support regularization gave improved prediction
accuracy over both other methods.\footnote{Regarding other sparse
  prediction methods, we did not manage to compare with OSCAR, due to
  memory limitations, or to PEN or trace Lasso, which do not have code
  available online.}

\begin{table}[t] {\small\caption{\small Mean squared errors and
      classification accuracy for the synthetic data (median over $50$
      repetition), SA heart data (median over $50$
      replications) and for the ``$20$ newsgroups'' data set. (SE =
      standard error)}}
\label{tab:mse}
\begin{center}
{
\begin{tabular}{|c||c||c|c||c|c|}
\hline
  & Synthetic & \multicolumn{2}{c||}{Heart} & \multicolumn{2}{|c|}{Newsgroups} \\
\hline
Method & MSE (SE) & MSE (SE) & Accuracy (SE) & MSE & Accuracy \\
\hline
Lasso & 0.2746 (0.02) & 0.18 (0.005) & 66.41 (0.53) & 0.70 & 73.02 \\
Elastic net & 0.3119 (0.03) & 0.18 (0.005) & 66.41 (0.53)& 0.71 &
72.53 \\
$k$-support & {\bf 0.2342} (0.02) & 0.18 (0.005) & 66.41 (0.53)& {\bf
  0.69} & {\bf 73.40} \\
\hline
\end{tabular}
}
\end{center}
\end{table}


\section{Summary}

We introduced the $k$-support norm as the tightest convex relaxation of
sparsity plus $\ell_2$ regularization, and showed that it is tighter
than the elastic net by exactly a factor of $\sqrt{2}$.
In our view, this sheds light on the elastic net as a close
approximation to this tightest possible convex relaxation, and
motivates using the $k$-support norm when a tighter relaxation is
sought.  This is also demonstrated in our empirical results.

We note that the $k$-support norm has better prediction properties,
but not necessarily better sparsity-inducing properties, as evident
from its more rounded unit ball.  It is well understood that there is
often a tradeoff between sparsity and good prediction, and that even
if the population optimal predictor is sparse, a denser predictor
often yields better predictive performance \cite{elastic,oscar,jacob2009group}. 
For example, in the presence of correlated features, it is often
beneficial to include several highly correlated features rather than a
single representative feature.  This is exactly the behavior
encouraged by $\ell_2$ norm regularization, and the elastic net is
already known to yield less sparse (but more predictive) solutions.
The $k$-support norm goes a step further in this direction, often
yielding solutions that are even less sparse (but more predictive)
compared to the elastic net.

Nevertheless, it is interesting to consider whether compressed sensing
results, where $\ell_1$ regularization is of course central, can be
refined by using the $k$-support norm, which might be able to handle
more correlation structure within the set of features.


\bibliographystyle{plain}

\begin{thebibliography}{10}

\bibitem{fista}
A.~Beck and M.~Teboulle.
\newblock A fast iterative shrinkage-thresholding algorithm for linear inverse
  problems.
\newblock {\em SIAM Journal of Imaging Sciences}, 2(1):183--202, 2009.

\bibitem{bhatia}
R.~Bhatia.
\newblock {\em Matrix Analysis}.
\newblock Graduate Texts in {M}athematics. Springer, 1997.

\bibitem{oscar}
H.D. Bondell and B.J. Reich.
\newblock Simultaneous regression shrinkage, variable selection, and supervised
  clustering of predictors with {OSCAR}.
\newblock {\em Biometrics}, 64(1):115--123, 2008.

\bibitem{combettes}
P.L. Combettes and V.R. Wajs.
\newblock {Signal recovery by proximal forward-backward splitting}.
\newblock {\em Multiscale Modeling and Simulation}, 4(4):1168--1200, 2006.

\bibitem{elastic_demol}
C.~De~Mol, E.~De~Vito, and L.~Rosasco.
\newblock Elastic-net regularization in learning theory.
\newblock {\em Journal of Complexity}, 25(2):201--230, 2009.

\bibitem{trace_lasso}
E.~Grave, G.~R. Obozinski, and F.~Bach.
\newblock Trace lasso: a trace norm regularization for correlated designs.
\newblock In J.~Shawe-Taylor, R.S. Zemel, P.~Bartlett, F.C.N. Pereira, and K.Q.
  Weinberger, editors, {\em Advances in Neural Information Processing Systems
  24}, 2011.

\bibitem{HLP}
G.~H. Hardy, J.~E. Littlewood, and G.~P{\'o}lya.
\newblock {\em Inequalities}.
\newblock Cambridge University Press, 1934.

\bibitem{hastie}
T.~Hastie, R.~Tibshirani, and J.~Friedman.
\newblock {\em The Elements of Statistical Learning: Data Mining, Inference and
  Prediction}.
\newblock Springer Verlag Series in Statistics, 2001.

\bibitem{hoerl1970ridge}
A.E. Hoerl and R.W. Kennard.
\newblock Ridge regression: Biased estimation for nonorthogonal problems.
\newblock {\em Technometrics}, pages 55--67, 1970.

\bibitem{jacob2009group}
L.~Jacob, G.~Obozinski, and J.P. Vert.
\newblock {Group Lasso with overlap and graph Lasso}.
\newblock In {\em Proceedings of the 26th Annual International Conference on
  Machine Learning}, pages 433--440. ACM, 2009.

\bibitem{kakade}
S.M. Kakade, K.~Sridharan, and A.~Tewari.
\newblock On the complexity of linear prediction: Risk bounds, margin bounds,
  and regularization.
\newblock In {\em Advances in Neural Information Processing Systems},
  volume~22, 2008.

\bibitem{keerthi_coste}
S.~S. Keerthi and D.~DeCoste.
\newblock A modified finite {Newton} method for fast solution of large scale
  linear {SVMs}.
\newblock {\em Journal of Machine Learning Research}, 6:341--361, 2005.

\bibitem{pen}
A.~Lorbert, D.~Eis, V.~Kostina, D.M. Blei, and P.J. Ramadge.
\newblock Exploiting covariate similarity in sparse regression via the pairwise
  elastic net.
\newblock In {\em Proceedings of the 13th International Conference on
  Artificial Intelligence and Statistics}, 2010.

\bibitem{nesterov2005smooth}
Y.~Nesterov.
\newblock {Smooth minimization of non-smooth functions}.
\newblock {\em Mathematical Programming}, 103(1):127--152, 2005.

\bibitem{Nesterov07}
Y.~Nesterov.
\newblock {\em {Gradient methods for minimizing composite objective function}}.
\newblock CORE, 2007.

\bibitem{srebro_smooth}
N.~Srebro, K.~Sridharan, and A.~Tewari.
\newblock Smoothness, low-noise and fast rates.
\newblock In {\em Advances in Neural Information Processing Systems 23}, 2010.

\bibitem{tseng08}
P.~Tseng.
\newblock {On accelerated proximal gradient methods for convex-concave
  optimization}.
\newblock Preprint, 2008.

\bibitem{tseng10}
P.~Tseng.
\newblock {Approximation accuracy, gradient methods, and error bound for
  structured convex optimization}.
\newblock {\em Mathematical Programming}, 125(2):263--295, 2010.

\bibitem{zhang2002covering}
T.~Zhang.
\newblock Covering number bounds of certain regularized linear function
  classes.
\newblock {\em The Journal of Machine Learning Research}, 2:527--550, 2002.

\bibitem{elastic}
H.~Zou and T.~Hastie.
\newblock Regularization and variable selection via the elastic net.
\newblock {\em Journal of the Royal Statistical Society: Series B (Statistical
  Methodology)}, 67(2):301--320, 2005.

\end{thebibliography}

\end{document}